\theoremstyle{thmstyleone}%
\newtheorem{theorem}{Theorem}
\newtheorem{corollary}{Corollary}[theorem]
\theoremstyle{thmstyletwo}%
\theoremstyle{thmstylethree}%
\newtheorem{definition}{Definition}%
\DeclareMathOperator*{\argmax}{arg\,max}
\DeclareMathOperator*{\argmin}{arg\,min}
\begin{document}

\title[DDPNAS]{DDPNAS: Efficient Neural Architecture Search via Dynamic Distribution Pruning}


\author[1,2]{\fnm{Xiawu} \sur{Zheng}}\email{zhengxiawu@stu.xmu.edu.cn}

\author[1]{\fnm{Chenyi} \sur{Yang}}\email{31520220157234@stu.xmu.edu.cn}

\author[1]{\fnm{Shaokun} \sur{Zhang}}\email{shaokunzhang529@gmail.com}

\author[5]{\fnm{Yan} \sur{Wang}}\email{yan.wang@samsara.com}

\author[6]{\fnm{Baochang} \sur{Zhang}}\email{bczhang@buaa.edu.cn}

\author[7]{\fnm{Yongjian} \sur{Wu}}\email{littlekenwu@tencent.com}

\author[7]{\fnm{Yunsheng} \sur{Wu}}\email{simonwu@tencent.com}

\author[8]{\fnm{Ling} \sur{Shao}}\email{ling.shao@ieee.org}

\author*[1,2,3,4]{\fnm{Rongrong} \sur{Ji}}\email{rrji@xmu.edu.cn}



\affil*[1]{Media Analytics and Computing Laboratory, Department of Artificial Intelligence, School of Informatics, Xiamen University, China}

\affil[2]{Peng Cheng Laboratory}

\affil[3]{Institute of Artificial Intelligence, Xiamen University, 361005, P.R. China}

\affil[4]{Fujian Engineering Research Center of Trusted Artificial Intelligence Analysis and Application, Xiamen University, 361005, P.R. China}

\affil[5]{Samsara, Seattle, WA}

\affil[6]{Beihang University}

\affil[7]{BestImage Lab, Tencent Co., Ltd., Shanghai 200233, China}

\affil[8]{Terminus Group, China}


\abstract{Neural Architecture Search (NAS) has {demonstrated} state-of-the-art performance on \textcolor{black}{various} computer vision tasks. 
{Despite \textcolor{black}{the} superior performance achieved, the efficiency and generality of existing methods are highly valued due to \textcolor{black}{their} high computational complexity and low generality.} 
In this paper, we propose an efficient and unified NAS framework termed DDPNAS via dynamic distribution pruning, {facilitating} a theoretical bound on accuracy and efficiency. 
In particular, we first sample architectures from a joint categorical distribution. {Then the search space is dynamically pruned and its distribution is updated every few epochs.} 
{With \textcolor{black}{the} proposed efficient network generation method, we directly obtain the optimal neural architectures on given constraints, which is practical for on-device models across diverse search space\textcolor{black}{s} and constraints.} 
The architectures searched by our method achieve remarkable top-1 accuracies, $97.56$ and $77.2$ on CIFAR-10 and ImageNet (mobile settings),  respectively, with the fastest search process, \emph{i.e.}, only $1.8$ GPU hours on a Tesla V100. Codes for searching and network generation are available at: \url{https://openi.pcl.ac.cn/PCL_AutoML/XNAS}}.

\keywords{Neural architecture search, Dynamic distribution pruning, Efficient network generation.}



\maketitle

\section{Introduction}\label{sec: intro}

\begin{figure*}[htb]
\center
\includegraphics[width=1.0\linewidth]{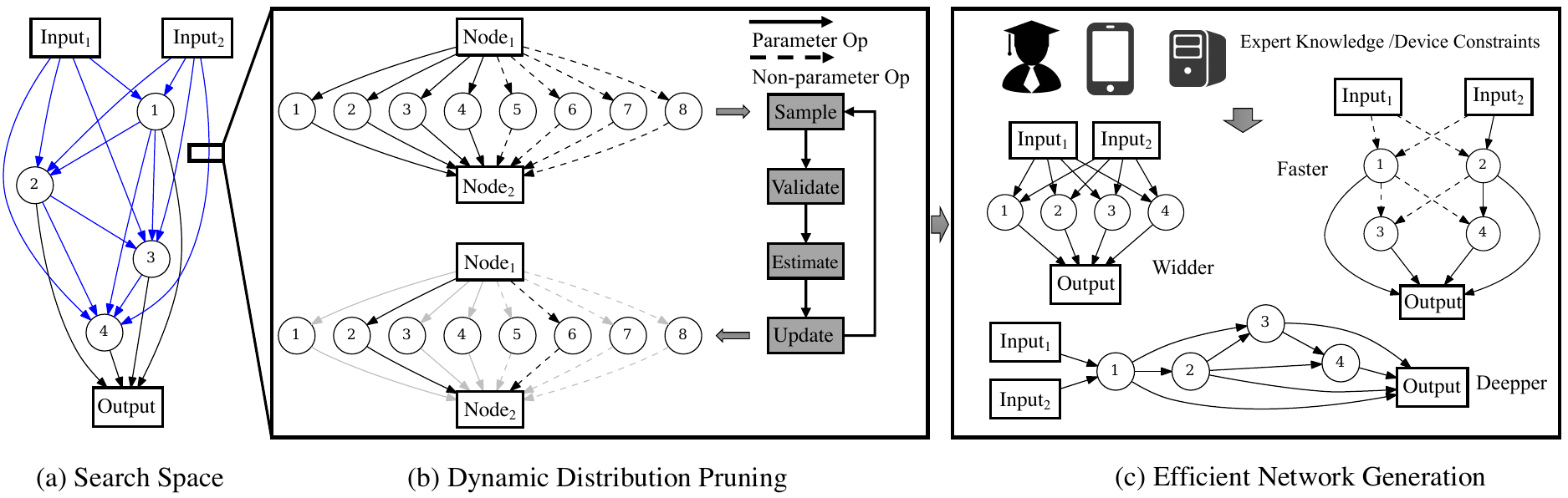}
\caption{\label{fig:alg} The overall framework of the proposed Dynamic Distribution Pruning and Efficient Network Generation. (a) The search cell is a fully connected directed acyclic graph with four nodes. (b) The proposed Dynamic Distribution Pruning method first samples architectures from the search space according to the corresponding distribution. Then, the generated network is trained over $T$ epochs with forward and backward propagation. The distribution is estimated by testing the network on the validation set. Finally, the element with the lowest probability is pruned from the candidates. The pruning step is ceased when only one parameter layer and one non-parameter layer retained in the search space. (c) Given a specific constraint, the best architecture is the one with maximum expectation in the pruned search space.}
\end{figure*}

Deep Neural Networks (DNNs) discovered by Neural Architecture Search (NAS) have demonstrated superior performance \textcolor{black}{than} handcrafted architectures {on various computer vision tasks, such as image classification \citep{zoph2016neural}, object detection \citep{ghiasi2019fpn} and segmentation \citep{liu2019auto}. NAS automates architecture engineering by searching state-of-the-art neural network architectures without human intervention over a vast architecture search space.}

Despite the remarkable effectiveness, existing NAS methods are still limited \textcolor{black}{by} the high demand on computational and memory costs during the architecture search process. Reinforcement Learning (RL) based methods \citep{zoph2018learning}\citep{zoph2016neural} {often take thousands of GPU days to search optimal architectures from certain search space.} To accelerate the training {process}, one-shot methods {focused on weight-sharing based methods. 
DARTS and its variants continuously parameterize the \textcolor{black}{architecture} distribution, enabling the gradient-descent based optimization to reduce search time, while retaining a comparable accuracy. Recent approaches further improve the efficiency of architecture search by progressively increasing the layers \citep{chen2019progressive} or reducing channels \citep{xu2019pc}.} 
However, the scope of these methods is also limited, \emph{e.g.}, the number of layers and channels are fixed in a chain-structure search space \citep{cai2018proxylessnas,cai2019once,wu2019fbnet,howard2019searching}.

{More importantly, \textcolor{black}{regarding} applying NAS on various resource-constrained platforms, especially on wearable electronics, IoT devices and mobile devices, research concerning on \textcolor{black}{efficient} searching of architectures under diverse constraints (\emph{e.g.,} model size, latency and network depth, etc.) becomes more urgent. However, this task is \textcolor{black}{significantly} time-consuming using conventional approaches, since the computational budgets surge linearly with an increase in the amount of constraints.} 
{To handle this issue, recent works~\citep{cai2019once,yu2019network} propose a ``once for all" paradigm, which trains an over-parameterized graph. And then, it is further inferred on different devices to find the optimal architectures. However, even maintaining the inference step is computationally expensive.} 
{For example, $16, 000$ samples with different input image sizes and architectures are required in previous work \citep{cai2019once}. The accuracy of sampled architectures \textcolor{black}{is} evaluated on $10, 000$ validation images from the original training set.}

{To alleviate the aforementioned challenges,} we present a unified and efficient NAS algorithm, which introduces a novel Dynamic Distribution Pruning (DDP) with Efficient Network Generation (ENG) to achieve extremely efficient and generalized NAS. 
In principle, we consider architectures as being sampled from a dynamic joint categorical distribution. 
Subsequently, a dynamic distribution $p(\theta)$ is introduced to control the \textcolor{black}{operations choices}, and a specific network architecture is directly sampled.
In the search process, we generate different samples and train them on the training set over a few epochs. 
Then, the evaluation results on the validation set are used to estimate categorical distribution. 
The element with the lowest probability is dynamically pruned. 
After the pruning process, only one parameter layer and one non-parameter layer {are} left in the search space. 
Given a specific constraint, the best architecture is the one \textcolor{black}{that} contains the highest expectation in the constrained search space. 
{Both theoretical and quantitative investigations are performed to validate the efficacy of our proposed method.}
Fig.~\ref{fig:alg} shows the overall framework. Our contributions are summarized as follows:
\begin{itemize}
\item {We introduce a novel dynamic distribution pruning NAS strategy (DDPNAS), which is flexible, memory-efficient, and generalized in different search spaces. Instead of designing algorithms on a specific search space like other methods \citep{zoph2018learning,liu2018darts},} we directly encode the search space to a probability distribution, which enables NAS to deal with different search spaces with a comparable computation cost to the training of conventional DNNs. Experiments on cell-wise, layer-wise and BERT-compression search space also validate the effectiveness of DDPNAS. In addition, our model can be also easily incorporated into most existing NAS algorithms to accelerate the search process. 
\item {A novel network generation method is proposed, termed as Efficient Network Generation, which is effective and generalizable to both device constraints and manual constraints. } 
\item We provide a theoretical interpretation and bound for the proposed method, which quantitatively demonstrates the rationale of the dynamic pruning design.
\item Extensive experiments show that DDPNAS achieves remarkable search efficiency, \emph{e.g.}, $2.44$\% test error on CIFAR10 \citep{krizhevsky2010convolutional} after only $1.8$ hours of search\textcolor{black}{ing} with one Tesla V100. When evaluated on ImageNet \citep{russakovsky2015imagenet}, DDPNAS can directly search over the full dataset within two days, achieving $77.2$ \% top-1 accuracy under the MobileNet settings.
\end{itemize}

\begin{table*}[t]
\small
\begin{center}
\setlength{\tabcolsep}{1mm}{
\begin{tabular}{c|ccccc}
\hline
\multirow{2}{*}{Algorthims} & Search   & Search    & Latency & Search          & {Complexity} \\ \cline{2-6} 
                           & Method   & Effectiveness & Aware   & Space           & {on} N {devices}          \\ \hline
NASNet \citep{zoph2018learning}                     & RL       &          &        & cell/layer-wise & $\times$ N          \\
AmoebaNet   \citep{real2018regularized}               & EA       &         &        & cell/layer-wise & $\times$ N         \\
ENAS \citep{pham2018efficient}                       & RL       & \checkmark        &        & cell            &  $\times$ N          \\
DARTS  \citep{liu2018darts}                    & gradient & \checkmark         &        & cell            & $\times$ N          \\
PC-darts \citep{xu2019pc}                  & gradient & \checkmark         &        & cell            & $\times$ N          \\
ProxylessNAS  \citep{cai2018proxylessnas}             & gradient & \checkmark         & \checkmark       & cell/layer-wise & $\times$ N          \\
Once-for-all  \citep{cai2019once}             & training &          & \checkmark       & layer-wise      & $+$ N          \\
Our DDPNAS                 & pruning  & \checkmark         & \checkmark       & cell/layer-wise/{BERT-compression} & $\times 1$          \\ \hline
\end{tabular}}
\end{center}
\caption{Comparison of the proposed DDPNAS with other state-of-the-art NAS algorithms. ``Search method" indicates different types of algorithms in the search phase. ``Search Effectiveness": We consider an algorithm effective if it takes $\leq 4$ GPU hours on CIFAR-10 and $\leq 4$ GPU days on ImageNet. ``Latency aware":  The algorithm is considered to be latency aware when it considers latency in its optimization. ``Search space" denotes the search scope on cell-based and layer-wise search space. ``Complexity on N devices": The computation cost with $N$ different devices/constraints. }
\label{tab:merit}
\end{table*}

\section{Related Work}

As an automatic machine learning technique, neural architecture search has recently attracted significant attention in computer vision. 
For a given dataset, architectures with high accuracy or low latency are obtained by performing a heuristic search in a predefined search space. For simple tasks like image classification, most human-designed networks are built by stacking \emph{reduction} (\emph{i.e.,} the spatial dimension is reduced and the channel size is increased) and \emph{norm} (\emph{i.e.,} the spatial and channel dimensions are preserved) cells \citep{he2016deep,krizhevsky2012imagenet,simonyan2014very,huang2017densely,hu2018squeeze, chen2021binarized}.
Therefore, existing NAS methods \citep{zoph2016neural,zoph2018learning,liu2018progressive,liu2018darts} can search architectures under the same settings to work on a small search space.

Various search algorithms have been proposed to explore the neural architecture space using specific search strategies. 
One popular approach is to model NAS as a \emph{Reinforcement Learning} (RL) problem \citep{zoph2016neural,zoph2018learning,baker2016designing,cai2018efficient,liu2018progressive,cai2018path}. 
Zoph \emph{et al.} \citep{zoph2018learning} employ a recurrent neural network as the policy function to sequentially generate a string that encodes the specific neural architecture. 
The policy network can be trained with the policy gradient algorithm or the proximal policy optimization.
\citep{cai2018efficient,cai2018path} proposed a method that regards the search space as a tree structure for network transformation. 
In this method, new network architectures can be generated by a father network with certain predefined transformations, which reduces the search space and speeds up the search. 
An alternative way to explore the architecture space is through \emph{evolutionary} methods, which generate a population of network architectures using evolutionary algorithms \citep{xie2017genetic, real2018regularized}. 
Although the above architecture search algorithms have achieved state-of-the-art results on various tasks, a large amount of computational resources are needed.

To overcome this problem, several recent works have been proposed to accelerate NAS in a \emph{one-shot} setting, which can find the optimal network architecture within a few GPU days. 
In this one-shot architecture search, each architecture in the search space is considered as a sub-graph sampled from a super-graph, and the search process can be accelerated by parameter sharing \citep{pham2018efficient}. 
\citep{liu2018darts} jointly optimized the weights within two nodes with the hyper-parameters under continuous relaxation. 
Both the weights in the graph and the hyper-parameters are updated via standard gradient descent.
However, the method in \citep{liu2018darts} still suffers from large GPU memory footprints, and the search complexity is still not applicable to many resource-limited scenarios. 
To this end, \citep{cai2018proxylessnas} adopted the differentiable framework and proposed to search architectures without any proxy. 
However, this method is still in the same line as \citep{liu2018darts}.

Compared to the existing schemes mentioned above, the proposed DDPNAS merits in several aspects, as itemized in Tab.~\ref{tab:merit}. We first formulate NAS in an entirely new way, where the operation selection is considered as a sample from a dynamic categorical distribution. 
And the optimal distribution can be obtained through a progressively pruning. After that, we can obtain the optimal architecture under arbitrary constraints by the proposed efficient network generation, which is extremely efficient, generalizing to different search spaces and constraints/devices.

\section{The Proposed Method}

\subsection{Architecture Search Space} \label{sec:3.1}
\textbf{Cell-based search space.} {We follow the same architecture search space design} as in \citep{liu2018darts,zoph2016neural,zoph2018learning}. A network consists of a pre-defined number of cells \citep{zoph2016neural}, which can be either norm cells or reduction cells. 
Each cell takes the outputs of the two previous cells as input. A cell is a fully-connected Directed Acyclic Graph (DAG) of $M$ nodes, \emph{i.e.}, $\{B_1, B_2, ..., B_M\}$. Each node $B_i$ takes the dependent nodes as input, and generates an output through a sum operation $B_j = \sum_{i<j} o^{(i,j)}(B_i).$ 
Here each node is a specific tensor (\emph{e.g.,} a feature map in convolutional neural networks) and each directed edge $(i,j)$ between $B_i$ and $B_j$ denotes an operation $o^{(i,j)} (.)$, which is sampled from the corresponding search space $\mathcal{O}$. Note that the constraint $i<j$ ensures there will be no cycles in a cell. 
We set the two input nodes as $B_{-1}$ and $B_{0}$ for simplicity. 
Following \citep{liu2018darts}, the operation search space $\mathcal{O}$ consists of $K = 8$ operations, which we divide into two sections. Specifically, the parameter operation $\mathcal{O}_p$ includes a $3\times3$ dilated convolution with rate $2$, $5\times5$ dilated convolution with rate $2$, $3\times3$ depth-wise separable convolution, and $5\times5$ depth-wise separable convolution. The non-parameter operation $\mathcal{O}_n$ includes $3\times3$ max pooling, $3\times3$ average pooling, no connection (zero), and a skip connection (identity). 
Therefore, the size of the whole search space is $2 \times K^{\mathcal{E_M}}$, where $\mathcal{E_M}$ is the set of possible edges with $M$ intermediate nodes in the fully-connected DAG. In our case with $M=4$, together with the two input nodes, the total number of cell-based \textcolor{black}{architectures} in the search space \textcolor{black}{is} $2 \times 8^{2+3+4+5} = 2 \times 8^{14}$, which is an extremely large space to search, and requires an efficient optimization strategy.

\textbf{Layer-wise search space.} {MobileNet based networks is utilized \citep{howard2017mobilenets,sandler2018mobilenetv2,howard2019searching} as the backbone. The search space is built with different widths and \textcolor{black}{heights} as in \citep{wu2019fbnet,cai2018proxylessnas,cai2019once}. Specifically, in \citep{cai2018proxylessnas}, {each depth-wise convolution layer in MobileNet is allowed to vary kernel sizes and expansion rates within $\{3, 5, 7\}$ and $\{3, 6\}$, respectively.} Besides, by accessing the zero operation from the set of candidate operators, the over-parameterized network allows the skipping of each ``norm" layer (stride $1$), which enables a \textcolor{black}{direct} trade-off between depth and width. Thus, {the architectures could be adjusted from} 1) ``fatter" or ``thinner" by varying expansion ratios; 2) ``taller" or ``shorter" in choosing different number of zero operations. Regarding our case in $\vert \mathcal{E_M}\vert = 20$ edges/layers and $K=2 \times 3 + 1$ operators/choices, our search space therefore is $K^{\vert \mathcal{E_M}\vert} = 7^{20}$, an efficient optimization \textcolor{black}{strategy} is also required to search in this extremely large space.}

\begin{figure}[t]
\center
\includegraphics[width=1.0\linewidth]{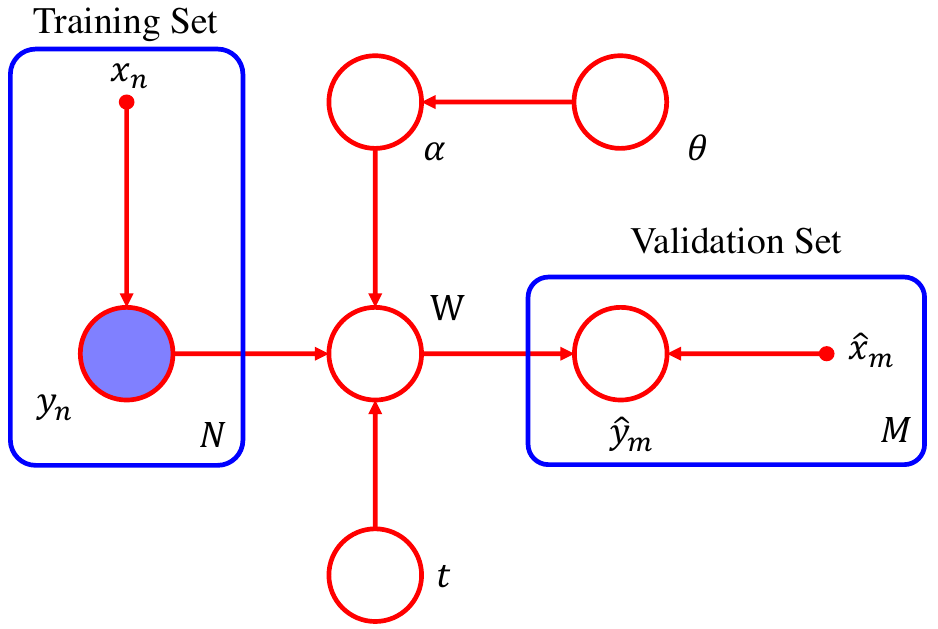}
\caption{\label{fig:prob}
{Directed graphical model representing the proposed joint distribution. Following the previous work \citep{bishop2006pattern}, the random variables in this model is denoted by the hollow circles. Plates (boxes \textcolor{black}{labeled} $N$ and $M$) represent $N$ or $M$ nodes of which only a single example $(x_n, y_n)$ is shown explicitly. We also assign the deterministic parameters as smaller \textcolor{black}{solid} nodes, while we shade nodes to indicate the corresponding random variables have been set to their observed (training set) values.}
}
\end{figure}

\subsection{Dynamic Distribution Pruning} \label{sec:3.2}
As illustrated in Fig.~\ref{fig:prob}, we first use a directed graph to describe the proposed probability distribution. 
More specifically, $\forall \quad(i,j) \in \mathcal{E_M}$, we introduce a categorical distribution $p\left(\alpha^{(i,j)} \vert \theta^{(i,j)}\right)$ defined on $\mathcal{A}$, where $\theta^{(i,j)}$ is a vector of length $K$. 
Thus, the sampling process begins from the latent state $\theta$. 
Then, each operation $o^{(i,j)}(.)$ is selected as the $k$-th operation with a probability $\theta^{(i,j)}_k$. 
We use an over-parameterized parent network \citep{liu2018darts,pham2018efficient} containing all possible operations at each edge with a weighted probability $\alpha^{(i,j)} \in \mathbb{R}^K$: $B_j = \sum_{i<j} \sum_{ o(.) \in \mathcal{O} }\alpha^{(i,j)} o^{(i,j)}(B_i),$ where $ j>0,i \geq -1 ,\{ i,j,k\} \in \mathbb{N}$. This design allows the neural architecture search to be optimized through stochastic gradient descent (SGD) by iteratively fixing $\alpha$ to update the network parameters $W$, and fixing $W$ to update $\alpha$.
While the real-value weights $\alpha$ bring convenience in optimization \citep{liu2018darts}, they {also require to evaluate every} possible operation in $\mathcal{O}$, which causes an impractically long training time. Instead, we set $\alpha^{(i,j)}$ to be a one-hot indicator vector:
\begin{equation}\label{eq:prob}
\alpha^{(i,j)} =
\left\{
\begin{array}{lr}
\left[1,0, ...,0 \right] \, {\rm with} \, {\rm probability} \, \theta^{(i,j)}_1,\\
\quad \quad ...\\
\left[0,0, ...,1 \right] \, {\rm with} \, {\rm probability} \, \theta^{(i,j)}_K,
\end{array}
\right.
\end{equation}
which is sampled from a categorical distribution, $p(\alpha \vert \theta) = \prod_{i<j}\text{Cat}(\alpha^{(i,j)}\vert\theta^{(i,j)})$. 
Although it brings significant speed-up, this discrete weight design also leads to considerable difficulty in optimization. 

As one of our core contributions, we propose to optimize the distribution using the validation likelihood as a target $p(\hat{y})$. Based on the training data $(X_\text{train} = \{x_1, ..., x_N\}, Y_\text{train} = \{y_1, ... y_N\})$, we are able to train the model according to the epoch random variable $t$ to obtain the parameter set $W$, which finally allows us to test on the validation set $X_\text{val} = \{\hat{x}_1, ..., \hat{x}_N\}$ and obtain the validation \textcolor{black}{likelihood} (also \textcolor{black}{known} as predictive distributions in \citep{bishop2006pattern}) $p(\hat{y})$. Our ultimate goal is to find an optimal distribution $\theta$ that has \textcolor{black}{the} largest \textcolor{black}{likelihood} on validation set, which involves standard sampling, training, evaluating and updating processes, as illustrated in Fig.~\ref{fig:alg}. 

While we reduce the computational requirement by $\lvert \mathcal{O} \rvert$ times when using discrete $\alpha$s, such a procedure is still time-consuming considering the large search space and the complexity of network training. Inspired by \citep{ying2019bench}, we further propose to use a dynamic pruning process to boost efficiency by a large margin. 
{\cite{ying2019bench} reach an observation that the validate \textcolor{black}{performance} ranking of different architectures in the early stage of training could not be a reliable indicator of the final quality of candidate architectures.}
Based on this intuition, we derive a simple yet effective pruning process: During training, along with the increasing epoch index $t$, we progressively prune the worst performing model. As analyzed later on in Sec.~\ref{sec:3.4}, this strategy produces good theoretical bounds.

{Suppose there are $E$ edges and $K$ candidates in the search space, $\alpha \in \{0,1\}^{E\times K}$, in which each row is a one-hot encoded \textcolor{black}{that} represents the operation choice of the corresponding edge. 
We also use an index $e$ to represent a specific edge $(i,j)$ for simplicity.
In this \textcolor{black}{case}, each row of $\theta \in \mathbb{R}^{E\times K}$ represents the probability distribution of the corresponding edge. 
Our ultimate goal is to find an optimal probability distribution that can fully represent the search space and the corresponding performance. With the optimized distribution, architectures that have a higher performance are more likely to be sampled, and vice versa. We propose to use simple yet effective steps, sampling, validating, estimating, and updating to find the optimal probability distribution. The detailed descriptions are listed as follow.
    \begin{itemize}
        \item \textbf{Sampling.} According to the probability graph model constructed in Fig.\ref{fig:prob}, we consider that the training epoch $t$ will largely affect the final performance. This, in turn, affects the final estimation of specific network architecture. We thus propose to disjointly sample $K$ architectures to alleviate this problem. Specifically, in each epoch $t$, we perform sampling without replacement according to $\theta$. In this way, each operation is only trained for one epoch. Intuitively, each operation shares \textcolor{black}{exactly} the same training epoch, which eliminates the impact of $t$ on performance estimation.
        \item \textbf{Validating.} We then train these $K$ architectures for one epoch and estimate the performance on the validation set.
        \item \textbf{Estimating.}
        When architecture \textcolor{black}{achieves} higher performance, we expect to increase the probability of the operation selected by this architecture, \textcolor{black}{and} vice versa. Although the operation performs well in this architecture, the impact of other operations should be considered. We propose to alternately repeat sampling and validating steps $T$ times to improve the estimation of each operation. 
        Specifically, we first introduce a performance matrix $P \in \mathbb{R}^{E\times K \times T}$ to record the sampled architectures and the corresponding performance. In a certain training epoch $t$, we convert $\alpha \in \mathbb{R}^{E\times K}$ to a vector $\alpha_{\text{vec}} \in \mathbb{R}^{E\times 1}$, where each element \textcolor{black}{represents} the index of the selected operations. For a specific edge $e$, we record the performance as 
        \begin{equation}\label{eq:info_reocrd}
            P[e, \alpha_{\text{vec}}[e], t] = p(\hat{y} \vert \alpha, t),
        \end{equation}
        where $p(\hat{y}\vert \alpha, t)$ is the validation performance of the architecture $\alpha$ on epoch $t$. And the utility matrix $\overline{P} \in \mathbb{R}^{E\times K}$ for all operations and edges is then obtained through
        \begin{equation}\label{eq:utility_matrix}
            \overline{P} =\frac{1}{T} P\boldsymbol{\vec{1}},
        \end{equation}
        where $\boldsymbol{\vec{1}} \in \{1\}^{T\times 1}$.
        \item \textbf{Updating.} In updating step, the probability $\theta \in \mathbb{R}^{E\times K}$ is updated by using momentum with softmax function. We define a momentum matrix $V\in \mathbb{R}^{E\times K}$ at an iteration $l$ as
        \begin{equation}\label{eq:v_estimation}
            V_{l+1}[e,k] = \gamma V_l[e,k] + (1-\gamma) \frac{ e^{ \left(  \overline{P}[e,k] \right)}}{\sum_j e^{ \left( \overline{P}[e,k] \right)}}.
        \end{equation}
        We set $V_0$ as \textcolor{black}{a} zero matrix, and the corresponding new probability is obtained through
        \begin{equation}\label{eq:theta_estimation}
            \theta_{l+1} = \theta_{l} + V_{l+1},
        \end{equation}
        which is then normalized by softmax function to ensure the sum of each row is $1$.
    \end{itemize}
}
In a specific edge $(i,j)$, after the estimation process in Eq.~\ref{eq:theta_estimation}, 
the operation with the lowest probability is pruned by setting $\mathcal{O}^{(i,j)} = \mathcal{O}^{(i,j)}_{\setminus [k]} $, where $k = \argmin \theta^{(i,j)}$. 
It significantly reduces the search space from $2 \times \lvert\mathcal{O}^{(i,j)}\rvert^{14}$ to $2 \times (\lvert\mathcal{O}^{(i,j)}\rvert-1)^{14}$. 
To maintain the diversity of the architectures generated (details on the architecture generation would be introduced in Sec.~\ref{sec:3.3}), we consider reserving one operation in both parameter operation space $\mathcal{O}_p^{(i,j)}$ and non-parameter operation space $\mathcal{O}_n^{(i,j)}$ after the pruning process. Our dynamic distribution pruning algorithm is presented in Alg.~\ref{alg:MDL}.



\begin{algorithm}[t]
\caption{Dynamic Distribution Pruning }\label{alg:MDL}
\begin{algorithmic}[1]
\Require Training dataset; Validation dataset; Searching hyper graph $\mathcal{G}$
\Ensure Optimal probability distribution $\theta$
\While{$K \geq 2$}
  \While{$t\leq T$}
  \State Disjoint sample $l$ network architecture
  \State Inherit weights from $\mathcal{G}$
  \State Update weights using training data
  \State Evaluate on validation dataset
  \State $t = t+1$
  \EndWhile
  \State Estimation utility by Eq.~\ref{eq:info_reocrd} and Eq.~\ref{eq:utility_matrix}
  \State Update the distribution by Eq .\ref{eq:v_estimation} and Eq.~\ref{eq:theta_estimation}
  \For{$(i,j) \in \mathcal{E_M}$}
  \State $k = \argmin p(\theta^{(i,j)})$
  \State Prune the minimal $k$, $\mathcal{O}^{(i,j)} = \mathcal{O}^{(i,j)}_{\setminus [k]} $
  \EndFor
  \State $K = K-1$
\EndWhile
\end{algorithmic}
\end{algorithm}

\begin{theorem}\label{theorem:1}
In a certain training epoch $t$, the architecture variable $\alpha$ directly determines the validation performance, specifically: $p(y\vert\theta, t) \propto p(\alpha\vert\theta).$
\end{theorem}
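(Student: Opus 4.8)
The plan is to exploit the probabilistic graphical model depicted in Fig.~\ref{fig:objective}, in which the latent state $\eta$ generates the structure variable $\alpha$, which in turn—through training the weights $W$ on $(X_\text{train}, Y_\text{train})$ and evaluating on $X_\text{val}$—produces the validation target $y$. This induces a Markov chain $\eta \to \alpha \to y$, and the whole argument rests on reading off the conditional independence it encodes together with the deterministic nature of training within a single epoch.

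First I would marginalize over the discrete set of one-hot indicator configurations defined by Eq.~\eqref{eq:prob}, writing
\begin{equation*}
p(y\mid\eta) = \sum_{\alpha} p(y\mid\alpha,\eta)\, p(\alpha\mid\eta).
\end{equation*}
Next I would invoke the Markov property of the graphical model, namely that $y$ is conditionally independent of $\eta$ given $\alpha$, so that $p(y\mid\alpha,\eta)=p(y\mid\alpha)$ and hence $p(y\mid\eta)=\sum_{\alpha} p(y\mid\alpha)\,p(\alpha\mid\eta)$. This reduces the statement to understanding the single factor $p(y\mid\alpha)$.

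The crucial step is to use the hypothesis that, within a fixed training epoch, the architecture $\alpha$ \emph{deterministically} fixes the trained weights $W$ and therefore the validation output $y$; that is, there is a map $y=f(\alpha)$. Consequently $p(y\mid\alpha)$ collapses to an indicator $\mathbf{1}[\,y=f(\alpha)\,]$ supported on a single value, and the marginalization becomes $p(y\mid\eta)=\sum_{\alpha:\,f(\alpha)=y} p(\alpha\mid\eta)$. Identifying $y$ with the architecture that produces it—a one-to-one correspondence once the epoch is frozen—the right-hand side is exactly $p(\alpha\mid\eta)$ for the matching $\alpha$, yielding $p(y\mid\eta)\propto p(\alpha\mid\eta)$ as claimed.

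The part I expect to require the most care is justifying the deterministic collapse $p(y\mid\alpha)=\mathbf{1}[\,y=f(\alpha)\,]$: it hinges on interpreting ``a certain training epoch'' as freezing all stochasticity in optimization (initialization, mini-batch ordering), so that the validation behaviour is a genuine function of $\alpha$ alone, and on the implicit injectivity of $f$ that lets distinct architectures be separated by their validation outputs. I would also be explicit that the proportionality is stated up to the normalization induced by this $\alpha$-to-$y$ identification, so that no spurious constant depending on $\eta$ enters and the intended consequence—that maximizing the validation likelihood is equivalent to selecting the most probable $\alpha$—is faithfully preserved.
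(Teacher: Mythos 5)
Your proposal is correct and follows essentially the same route as the paper: both arguments read off the chain $\eta \to \alpha \to W \to y$ from Fig.~\ref{fig:objective} and then use the assumption that, within a fixed training epoch, training is deterministic (the paper phrases this as constant weight initialization making $W$ a fixed function of $\alpha$), so that the only stochastic factor left in $p(y|\eta)$ is $p(\alpha|\eta)$. The only difference is that you carry out the marginalization over $\alpha$ explicitly and flag the injectivity of the $\alpha \mapsto y$ map as a needed hypothesis, steps the paper leaves implicit when it writes the factorized joint in Eq.~\ref{eq:objective} and collapses it directly to Eq.~\ref{eq:objective_shorter}.
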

\begin{proof}
{As illustrated in Fig.~\ref{fig:prob}}, the likelihood of the validation target for a given distribution parameter $\theta$ can be formulated as:
\begin{equation}\label{eq:objective}
\begin{aligned}
&p(y\vert\theta, t) = p(y\vert X_\text{train}, Y_\text{train}, W, X_\text{val},\theta, t) \\
&= p(\alpha \vert \theta)p(W\vert t,\alpha,X_\text{train},Y_\text{train})p(y\vert W,X_\text{val}),
\end{aligned}
\end{equation}
where $X_\text{train}$ and $Y_\text{train}$ are the inputs and labels from the training set, $W$ is the set of network weights, and $t$ denotes the training epochs. Since $X_\text{train}, X_\text{val}, Y_\text{train}$ are observed variables, during a specific training epoch $t$, Eq.~\ref{eq:objective} can be further simplified to:
\begin{equation}\label{eq:objective_short}
p(y\vert \theta, t) = p(\alpha\vert \theta)p(W\vert \alpha, t)p(y\vert W).
\end{equation}
To simplify the analysis, without loss of generality, we assume the network weights are initialized as constants, which means $W$ is fixed for a given architecture and a given training epoch. We can further simplify Eq.~\ref{eq:objective_short} to
\begin{equation}\label{eq:objective_shorter}
p(y\vert \theta, t) =p(\alpha\vert \theta)p(y\vert \alpha, t) \propto p(\alpha\vert \theta).
\end{equation}
As shown in Eq.~\ref{eq:objective_shorter}, the architecture variable $\alpha$ directly determines the validation performance, \emph{i.e.,} if an architecture shows better performance on the validation set, the corresponding $\theta^{(i,j)}_k$ holds a high probability, and vice versa. 
Therefore, Theorem \ref{theorem:1} is true during any specific training epoch.
\end{proof}

\subsection{Efficient Network Generation} \label{sec:3.3}

We obtain a fully-connected DAG after the pruning process Sec.~\ref{sec:3.2}. Each edge $(i,j)$ in the graph only contains one parameter operation and one non-parameter operation. 
{In previous research, together with the corresponding probability $\theta^{(i,j)}$, a discrete architecture is derived by maintaining $n=2$ strongest paths for each intermediate node, and thus every mixed operation is replaced with the most likely operation by taking the argmax, which is inflexible and uncontrollable \citep{chen2019progressive,xu2019pc,Zheng_2019_ICCV,liu2018darts} .}
Yu \emph{et.al.} \citep{yu2019network, cai2019once} proposed to evaluate the model on the validation set and greedily prune the operation with minimal drop in accuracy, which is computationally expensive. In contrast, the proposed efficient network generation (ENG) is flexible to arbitrary constraints and effective. The idea is intuitive: the final probability $\theta$ calculated by Eq.~\ref{eq:v_estimation} and Eq.~\ref{eq:theta_estimation} is a weighted accumulation of the validation performance through epoch $t$. 
Therefore, the optimal architecture can be obtained based on a new measure, which is called architecture expectation and defined as:
\begin{definition}{}
The architecture expectation is an average of the architecture $\alpha$ over the corresponding operation probability:
\begin{equation}
E[\alpha] = \frac{1}{\vert \mathcal{E_M}\vert}\sum_{(i,j),k} \alpha^{(i,j)}_k \theta^{(i,j)}_k,
\end{equation}
where $\vert\mathcal{E_M}\vert$ is the number of edges defined in Sec.~\ref{sec:3.1}.
\end{definition}
 The optimal architecture can be generated by:
\begin{equation}\label{eq:transformed_object}
\alpha^{*} = \argmax_\alpha E[\alpha] \; s.t. \; f(\alpha) \leq {\nu}, \alpha \in \mathcal{O}.
\end{equation}
where $f(.)$ is a function that calculates constraints for architectures, such as FLOPs and latency, and $\nu$ is given for different applications.
{Given $\nu$ for different deployed platforms, $f(.)$ is a function that estimates constraints for architectures (e.g. Latency and FLOPs).} 
{Optimizing Eq.~\ref{eq:transformed_object} is extremely efficient, since only a few minutes is demanded for traversing the whole constraint space $\mathcal{O}$ and calculating $E[\alpha]$ on a single CPU.}
For example, we want to generate architectures by constraining each intermediate node $B_i$ with two inputs in cell-based search space as \citep{liu2018darts,zoph2018learning}. Under these circumstances, there are only $30 \times 2^8 \approx 8 \times 10^3$ architectures in the pruned search space.

\subsection{Theoretical Analysis}\label{sec:3.4}

\begin{figure}
\centering
\includegraphics[width=1.0\linewidth]{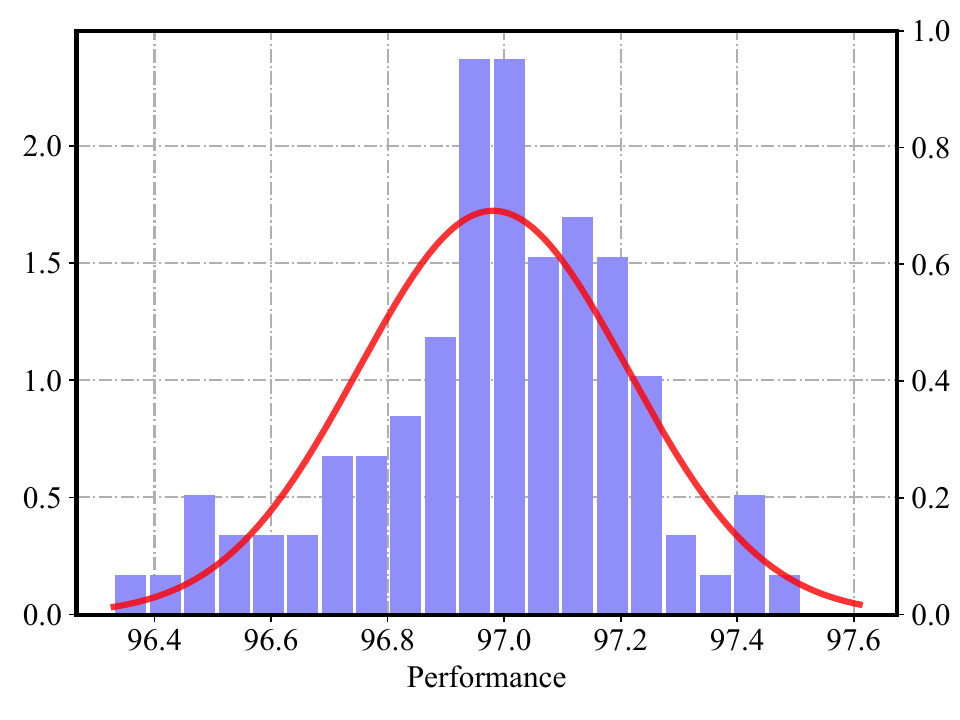}
\caption{\label{fig:pruning_mistake} The performance estimation distribution on a specific operation. In order to investigate the distribution of an operation, we fix the operation on \textcolor{black}{an} edge, then sample 100 networks in this pruned search space and train these network\textcolor{black}{s} until convergence. We can see that these networks roughly obey a Gaussian distribution.\label{fig:fig_2}}
\end{figure}

We further provide a theoretical bound for our performance pruning methods. As illustrated in Fig.~\ref{fig:fig_2}, we assume that $p(y\theta, t)$ follows a Gaussian distribution with $\mu$ as the expectation and additive Gaussian noise:
\begin{equation}
p(y\vert \theta, t) =\mathcal{N}(\mu_t, \sigma_t) = \mu_t + N(0, \epsilon_t) = \mu_t + \epsilon_t,
\end{equation}
where $\epsilon$ is a zero mean Gaussian random variable with a variance $\sigma_t$. A corollary is given to show that the distribution is bounded when converging. \footnote{The details about theoretical proof and the experiment of the Gaussian assumption are provided in the supplementary material.}
\begin{corollary} \label{corollary:1}
In the $t$-th epoch, the performance distribution is
\begin{equation}\label{eq:sigma_function}
p(y\vert \theta, t) \sim \mathcal{N} (\mu_t, \sigma_t),
\end{equation}
where $\sigma_t = \beta \Delta + \eta$, and $\beta, \eta$ are two constants. $\Delta = t^{*} -t$ where $t^* $ is the converging epoch.
\end{corollary}
Corollary~\ref{corollary:1} is a generalized conclusion from \citep{ying2019bench}, which shows that the standard deviation of the performance estimation will be bounded when the network converging. In \citep{ying2019bench}, the variance has an increasing Spearman Rank Correlation ($r_s$) when converging. Specifically, $r_s$ is linear to  $\Delta$ (0 for converging), and thus bounded. This assumption is true in learning rate reduction schemes, such as the cosine annealed schedule. We generalize this empirical formulation using formal mathematical language in Eq.~\ref{eq:sigma_function} by introducing a deviation function for the estimation error $\sigma$, \emph{e.g.,} a low rank correlation corresponds to a high deviation. 
To analyze the error bound, we further define what constitutes a pruning mistake based on the distribution hypothesis.

\begin{theorem}\label{theorem:2}
Assuming $\xi_{0,t} \sim \mathcal{N}(\mu_{0,t},\,\sigma_t^2)$ and  $\xi_{1,t} \sim \mathcal{N}(\mu_{1,t},\,\sigma_t^2)$ denote the performance estimation distribution of the worst and second worst operations at the $t$-th epoch, respectively. The probability that a pruning error occurs is
\begin{equation}\label{eq:delta_o}
p(\xi_{1,t} - \xi_{0,t} < 0),
\end{equation}
with $\xi_{1,t} - \xi_{0,t} \sim \mathcal{N}(\delta_t, \,2\sigma_t^2)$. $\delta_t = \frac{\zeta_t}{K} > 0 $ denotes the average distance with $\zeta$ as the difference between the maximum performance and minimum performance.
\end{theorem}
\begin{proof}
$ p(\xi_{1,t} - \xi_{0,t} < 0)$ denotes the probability that the penultimate operation will be pruned, which causes a pruning mistake due to the greedy strategy. Since we estimate different operations independently, the distribution $\xi_{1,t} - \xi_{0,t}$ is calculated through the random variable convolution
\begin{equation}
\small
\begin{split}
&\mathbb{E}(\xi_{1,t} - \xi_{0,t}) = \mu_{1,t} - \mu_{0,t},\\
& \sigma_{\xi_{1,t} - \xi_{0,t}} = \sqrt{\sigma_{\xi_{1,t} }^2 + \sigma_{\xi_{0,t}}^2} = \sqrt{2}\sigma_t.
\end{split}
\end{equation}
Together with the fact that $\xi_{1,t}$ and $\xi_{0,t}$ are two adjacent operations in the sampling process, the distance between $\mu_{1,t}$ and $\mu_{0,t}$ is $\delta_t $. And the distribution follows $\xi_{1,t} - \xi_{0,t} \sim \mathcal{N}(\delta_t, \,2\sigma_t^2)$.
\end{proof}

\begin{theorem}\label{theorem:3}
The upper bound of the error rate of Alg.~\ref{alg:MDL} at the $t$-th epoch is
\begin{equation}
2 \left(\frac{ K \left(\beta \Delta_t + \eta\right)}{\zeta_t } \right)^2.
\end{equation}
\end{theorem}

\begin{proof}

According to Theorem~\ref{theorem:2}, and Chebyshev’s inequality, we have
\begin{equation}
\small
\begin{split}
p(\xi_{1,t}-\xi_{0,t}<0) &=  p(\xi_{1,t}-\xi_{0,t} - \delta_t< - \delta_t)\\
&= \frac{1}{2} \, p(\vert \xi_{1,t}-\xi_{0,t} - \delta_t\vert > \delta_t)\\
&\leq \frac{1}{2}\left(\frac{\sqrt{2}\sigma_t}{\delta_t}\right)^2 = \left(\frac{\sigma_t}{\delta_t}\right)^2.
\end{split} 
\end{equation}
While the bound above is between the worst and second worst samples, if we consider a series of $K$ samples, the overall bound of the total error rate is
\begin{equation}
\small
\begin{split}
\sum_{n=1}^K \left( \frac{\sigma_t}{n \delta_t} \right) ^2 &= \left(\frac{\sigma_t}{ \delta_t} \right) ^2 \sum_{n=1}^K \frac{1}{n^2}\\
&< \left(\frac{\sigma_t}{ \delta_t} \right) ^2 \left( 1 +\sum_{n=2}^K \frac{1}{n(n-1)}\right)\\
&= \left(\frac{\sigma_t}{ \delta_t} \right) ^2 \left( 2 - \frac{1}{K}\right).
\end{split}
\end{equation}
Based on $\sigma_t$ and $\delta_t$ defined in Corollary~\ref{corollary:1} and Theorem~\ref{theorem:2}, the error bound can be further formulated as 
\begin{equation}
\small
\begin{split}
\left( 2 - \frac{1}{K}\right)\left(\frac{K \left(\beta \Delta_t+ \eta \right)}{\zeta_t} \right)^2 <2 \left(\frac{ K\left(\beta \Delta_t + \eta \right)}{\zeta_t} \right)^2.
\end{split}
\end{equation}
\end{proof}

In the above theorems, we quantitatively demonstrate the rationale behind the dynamic pruning design. The error bound is decided by $t$ and $\zeta$. On one hand, when the training begins, $\Delta$ is large, and we have to be conservative to avoid pruning the architecture too early. On the other hand, when $t$ gets closer to $t^*$, we can prune more aggressively with a guaranteed low risk of missing the optimal architecture. 

\textbf{Complexity Analysis.} Finding the optimal architecture in the search space using previous methods \citep{zoph2018learning,real2018regularized} with $N$ different constraints involves $\mathcal{O}(K^{2\vert \mathcal{E_M}\vert } \cdot N)$ computations. In contrast, the proposed method dynamically prunes the search space every $T$ epochs, and generates architectures that satisfy $N$ constraints within a few minutes. Therefore, the complexity of our method is
\begin{equation}
\mathcal{O}(T\times \sum_{k=2}^K k) = \mathcal{O}(T\times \frac{(K+2)(K-1)}{2}) = \mathcal{O}(TK^2).
\end{equation}

\section{Experiments}\label{sec:experiment}

{In this section, we compare the proposed method against the state-of-the-art methods in effectiveness and efficiency on CIFAR10 and ImageNet. }
First, we conduct experiments under the same settings {following} \citep{liu2018darts,cai2018path,zoph2018learning,liu2018progressive} to evaluate the generalization capability, \emph{i.e.,} first searching on the CIFAR10 dataset, then stacking the optimal cells \textcolor{black}{into} deeper networks. Second, we further perform experiments of searching architectures directly on ImageNet under the mobile settings, following \citep{cai2018proxylessnas}. 
\begin{table*}[t]
\small
\begin{center}
\setlength{\tabcolsep}{2.5mm}{
\begin{tabular}{lcccc}
\toprule[1pt]
\multirow{2}{*}{\textbf{Architecture}} & \textbf{Test Error} & \textbf{Params} & \textbf{Search Cost} & \textbf{Search} \\
& \textbf{(\%)} & \textbf{(M)} & \textbf{(GPU days)} & \textbf{Method} \\ \hline
ResNet-18 \citep{he2016deep} & 3.53 & 11.1 & - & Manual \\
DenseNet \citep{huang2017densely} & 4.77 & \textbf{1.0} & - & Manual \\
SENet \citep{hu2018squeeze} & 4.05 & 11.2 & - & Manual \\ \hline
NASNet-A \citep{zoph2018learning} & 2.65 & 3.3 & 1800 & RL \\
AmoebaNet-A \citep{real2018regularized} & 3.34 & 3.2 & 3150 & Evolution \\
PNAS \citep{liu2018progressive} & 3.41 & 3.2 & 225 & SMBO \\
Random Search \citep{li2019random} & $3.03 \pm 0.13$ & 2.2 & 2.7 & Random Search \\
$\text{Random Search 100}^*$& $2.55 $ & 2.9 & 108 & Random Search \\
 $\text{DARTS}^*$ \citep{liu2018darts} & $2.7\pm0.01$ & 3.1 & 1.5 & Gradient-based \\
GDAS \citep{Dong_2019_CVPR} & 2.93 $\pm$ 0.07 & 3.4 & 0.8 & Gradient-based \\
 $\text{MdeNAS}^*$\citep{Zheng_2019_ICCV} & 2.80 $\pm$ 0.15 & 3.6 & 0.16 & MDL \\
$\text{Pdarts}^*$\citep{chen2019progressive} & 2.91 $\pm$ 0.06 & 3.4 & 0.3 & Gradient-based \\
$\text{PCdarts}^*$ \citep{xu2019pc} & 2.73 $\pm$ 0.01 & 3.6 & 0.1 & Gradient-based \\
{XNAS \citep{nayman2019xnas}} &{$2.55$} & {$3.79$}  &{0.3}  & {Gradient-based }      \\
{GAGE \citep{li2020geometry}}  &{$2.67$} & {3.63 } &{0.3 } &{ Gradient-based  }     \\
{DARTS- \citep{chu2020darts}}  &{$2.59\pm 0.08$} & {3.5}  &{0.4 } & {Gradient-based }     \\
{MIGO-NAS \citep{zheng2021migo}}  &{$2.65\pm 0.14$} & {$3.23\pm0.2$}  &{0.06} & {MIGO}     \\
\hline
\textbf{DDPNAS} & \textbf{$2.59 \pm 0.01 \vert 2.44$} & $3.16$ & \textbf{0.075} & \textbf{Pruning} \\
\textbf{DDPNAS NoPrune} & $2.75 \pm 0.03 \vert 2.70$ & $2.1$ & 0.25 & - \\

 \bottomrule[1pt]
\end{tabular}}
\end{center}
\caption{ Test error rates ($\mu \pm \sigma\vert\min$) for our discovered architectures, human-designed networks and other NAS architectures on CIFAR10. For a fair comparison, we train the architectures with similar parameters ($<$ $5$M) and exactly \emph{the same training conditions}. $*$ denotes that we search the architecture with the provided code. ``Random Search 100" denotes that we randomly choose $100$ architectures, to select the optimal architecture using standard training and evaluation. ``DDPNAS NoPrune" denotes that we do not prune operators in the search process.}
\label{tab:cifar_results}
\end{table*}

\begin{itemize}
    \item {Regarding the search process, the ofﬁcial training images are \textcolor{black}{randomly} split into two groups, with each group containing 25K images in CIFAR-10. The former is assigned to the training set $X_\text{train}, Y_\text{train}$ in Alg.~\ref{alg:MDL}, and the other is used as the validation set $X_\text{val}, Y_\text{val}$ in Alg.~\ref{alg:MDL}. After disjointly sampling $K$ architectures, their corresponding weights are inherited from one-shot model $\mathcal{G}$. The sampled network \textcolor{black}{is} trained with $X_\text{train}, Y_\text{train}$ and evaluated on $X_\text{val}, Y_\text{val}$. After repeating the previous steps for $T$ times, we update the parameters of the distribution $\theta$ \textcolor{black}{depending} on the performance of the sampled architectures by Eq.~\ref{eq:info_reocrd}, Eq.~\ref{eq:utility_matrix} and Eq.~\ref{eq:v_estimation}. Then the operation is pruned for every edge with the minimum probability. The searching is terminated when the search space retains one operation (\emph{i.e., $K = 1$}).}
    \item After the searching process, we obtain a distribution $\theta$ which is prone to sample architectures with high performance. In terms of cell-based search space~\citep{liu2018darts}, we follow \citep{liu2018darts} to derive discrete architectures: selecting two operations with the largest probability from $\theta$ in each intermediate node to generate the architecture. However, the proposed ENG is also capable of generating architecture with different heights and widths, where the experiment is conducted in Tab.~\ref{tab:different_constraint}. When the search space is chain-structured, we employ the proposed efficient network generation according to the distribution to select the architectures under different hardware constraints. Both architectures are then trained from scratch to obtain the final performance. 
\end{itemize}

\textbf{Experimental Settings.} We use the same datasets and evaluation metrics as existing NAS works \citep{liu2018darts,cai2018path,zoph2018learning,liu2018progressive}. First, most experiments are conducted on CIFAR10, which has $50$K training images and $10$K testing images with resolution $32 \times 32$, from $10$ classes. The color intensities of all images are normalized to $[-1, +1]$. During the architecture search, we randomly select $5$K images from the training set as a validation set. To further evaluate the generalization capability, we stack the optimal cells discovered on CIFAR10 into a deeper network, and then evaluate the classification accuracy on ILSVRC 2012 ImageNet, which consists of $1,000$ classes with $1.28$M training images and $50$K validation images. Here, we consider the \emph{mobile} setting, where the input image size is $224 \times 224$ and the number of multiply-add operations is less than 600M.

In the search process, we consider a total of $L=6$ cells in the network, where the reduction cells are inserted in the second and the third layers, with $M = 4$ nodes in each cell. The search epoch correlates to the estimating epoch $T$. In our experiment, we set $T = 3$, so the network is trained for less than $100$ epochs, with a batch size of $512$, and $16$ initial channels. We use SGD to optimize the network weights $W$, with an initial learning rate of $0.025$ ({annealed down to zero following a Cosine schedule}), a momentum of 0.9, and a weight decay of $3 \times 10^{-4}$. The learning rate of category parameters is set to $0.01$. The search takes only $1.8$ GPU hours with one Tesla V100 on CIFAR10.

\begin{figure*}[htb]
\centering
\includegraphics[width=1.0\linewidth]{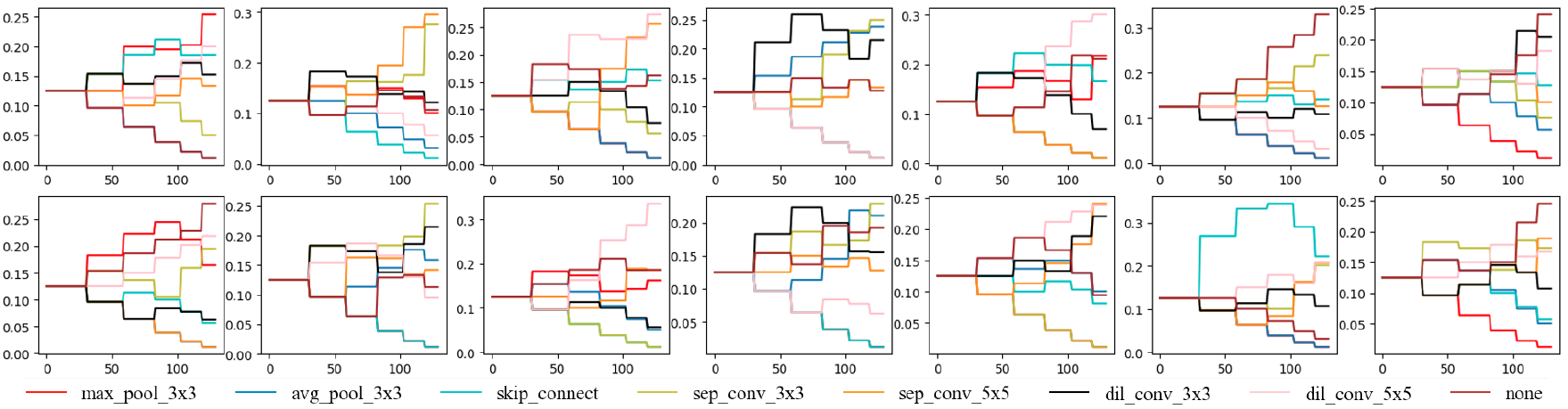}
\caption{\label{fig:weight} {The probability of each operation and edge in the DARTS search space in the search process. Each subgraph represents a specific edge in the search space, and the lines in each subgraph represent different operations on the edge. Note that during the search process, we retain the corresponding probability for the pruned operation for normalization.}}
\end{figure*}

\begin{figure}[htb]
\centering
\begin{subfigure}[b]{0.5\textwidth}
    \centering
   \includegraphics[width=1.0\linewidth]{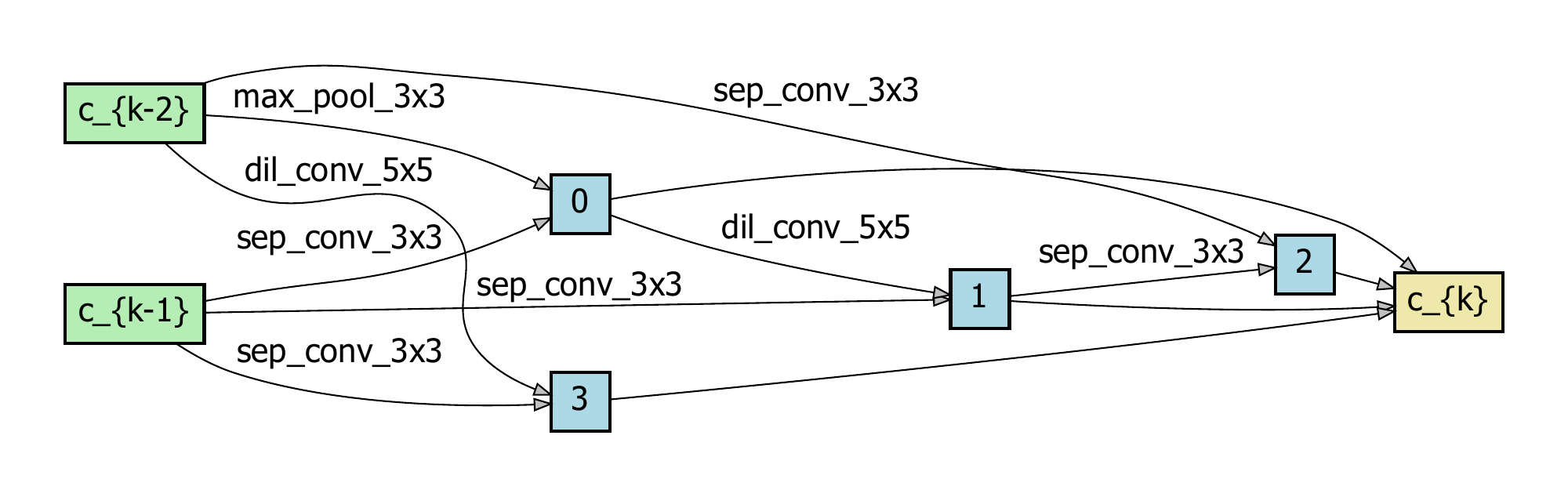}
   \caption{Normal cell}
   \label{fig:Ng1} 
\end{subfigure}
\newline
\newline
\begin{subfigure}[b]{0.5\textwidth}
\centering
   \includegraphics[width=1.0\linewidth]{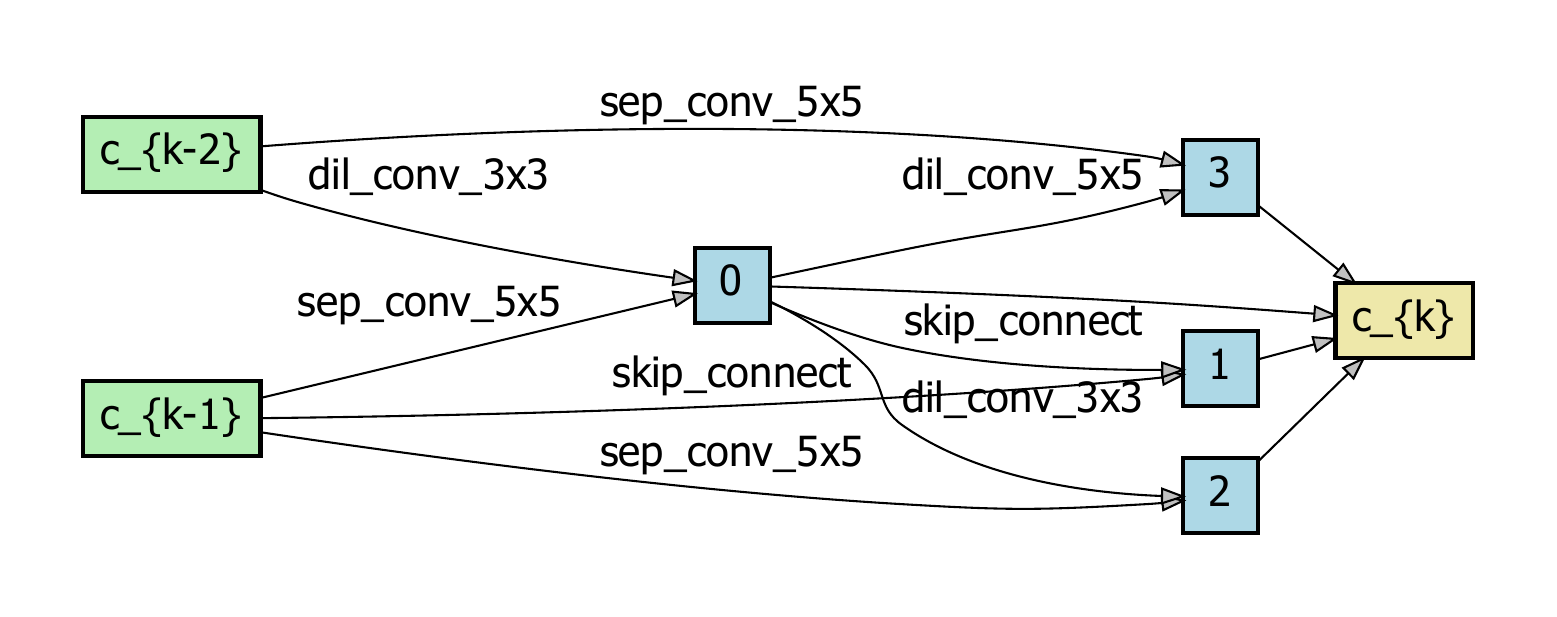}
   \caption{Reduction cell}
   \label{fig:Ng2}
\end{subfigure}

\caption{\label{fig:darts_architectures} {The found normal cell and reduction cell by the proposed method.}}
\end{figure}

In the architecture evaluation step, our experimental settings are similar to \citep{liu2018darts,zoph2018learning,pham2018efficient}. A large network of $20$ cells is trained for $600$ epochs with a batch size of $96$ and additional regularization, such as cutout \citep{devries2017improved}. When stacking cells to evaluate on ImageNet, we use two initial convolutional layers of stride $2$ before stacking $14$ cells with scale reduction at the $1$st, $2$nd, $6$th and $10$th cells. 
The network is trained for 250 epochs with a batch size of 512, a weight decay of $3 \times 10 ^{-5}$, and an initial learning rate of 0.1. All the experiments and models are implemented in PyTorch \citep{paszke2017automatic}.

In this experimental setting, we first search neural architectures on an over-parameterized network, and then evaluate the best architecture with a stacked deeper network. We run the experiment multiple times and find that the resulting architectures only show \textcolor{black}{a} slight variance in performance, which demonstrates the stability of the proposed method.

\begin{table*}[t]
\begin{center}
\setlength{\tabcolsep}{0.5mm}{
\begin{tabular}{c|cccc|cc}
\hline
\multirow{2}{*}{\textbf{Model}} & \textbf{Search}    & \textbf{Search}     & \textbf{Search Cost} & \textbf{Search}   & \multirow{2}{*}{ \textbf{FLOPs} } &\textbf{Top1}    \\
                       & \textbf{method}    & \textbf{Space}      & \textbf{(GPU days)}   & \textbf{dataset}  &                        & \textbf{acc}\\ \hline
1.0-MobileNetV2 \citep{sandler2018mobilenetv2}        & manual    & -          & -           & -        & 300M                   & 72.0    \\
1.5-ShuffleNetV2 \citep{ma2018shufflenet}       & manual    & -          & -           & -        & 299M                   & 72.6    \\
1.4-MobileNetV2  \citep{sandler2018mobilenetv2}       & manual    & -          & -           & -        & 585M                   & 74.7    \\
2.0-ShuffleNetV2  \citep{ma2018shufflenet}       & manual    & -          & -           & -        & 591M                   & 74.9    \\ \hline
DARTS      \citep{liu2018darts}            & gradient  & cell-wise       & 4           & CIFAR-10 & 574M                   & 73.3    \\
GDAS      \citep{Dong_2019_CVPR}            & gradient  & cell-wise       & 0.21           & CIFAR-10 & 581M                   & 74.0    \\
NASNet-A   \citep{zoph2018learning}        & RL        & cell-wise       & 1800        & CIFAR-10 & 564M                   & 74.0    \\
AmoebaNet-C \citep{real2018regularized}            & evolution & cell-wise       & 3150        & CIFAR-10 & 570M                   & 75.7    \\
PNAS  \citep{liu2018progressive}                 & SMBO      & cell       & 225         & CIFAR-10 & 588M                   & 74.2    \\
{FAIR-Darts  \citep{chu2020fair}}        & {gradient}  & {cell-wise} & {0.4}           & {CIFAR-10} & {541M}                  & {75.1}   \\
{RL-NAS  \citep{zhang2021neural}}        & {unsup}  & {cell-wise} & {-}           & {CIFAR-10} & {581M}                  & {75.6}   \\
{RL-NAS  \citep{zhang2021neural}}        & {unsup}  & {cell-wise} & {-}           & {ImageNet} & {561M}                  & {75.9} \\
DDPNAS                 & pruning   & cell-wise       & 0.016       & CIFAR-10 & 450M                   & 75.2       \\ \hline
MnasNet-92 \citep{tan2019mnasnet}         & RL  & cell-wise & 3000+           & ImageNet & 388M                   & 75.3   \\
FBnet-C \citep{wu2019fbnet}            & gradient  & layer-wise & 9           & ImageNet & 375M                   & 74.9   \\
$\text{proxyless-GPU}$  \citep{cai2018proxylessnas}        & gradient  & layer-wise & 4           & ImageNet & 465.M                  & 75.1   \\
$\text{proxyless-CPU}$  \citep{cai2018proxylessnas}        & gradient  & layer-wise & 4           & ImageNet & 439 M                  & 75.3   \\
{RL-NAS  \citep{zhang2021neural}}        & {unsup}  & {layer-wise} & {-}           & {ImageNet} & {473M}                  & {75.6} \\
DDPNAS-A              & pruning   & layer-wise & 2           & ImageNet & 599M               & 77.2       \\
DDPNAS-B              & pruning   & layer-wise & 0           & ImageNet & 414M                      & 75.5       \\
DDPNAS-C              & pruning   & layer-wise & 0           & ImageNet & 211M                      & 74.2       \\ \hline
\end{tabular}}
\end{center}
\caption{Comparison with the state-of-the-art image classification networks on ImageNet under the mobile settings. We divide these baseline methods into $3$ groups according to the search dataset for a fair comparison. Please note that, we only need to search once to generate architectures (DDPNAS-A/B/C) under different $\nu = {600, 400,200}$ FLOPs with the proposed ENG in Sec.~\ref{sec:3.3}. Therefore, the search time of DDPNAS-B and DDPNAS-C is 0. $*$ denotes that these baseline models are trained with our code exactly for the same training condition. }
\label{tab:_imagenet}
\end{table*}

\subsection{Results on CIFAR10}

We compare our method with both manually designed and NAS networks. The manually designed networks include ResNet \citep{he2016deep}, DenseNet \citep{huang2017densely} and SENet \citep{hu2018squeeze}. For NAS networks, we compare our approaches against various search methods, including RL methods \citep{zoph2018learning}, evolutional algorithms \citep{real2018regularized}, Sequential Model Based Optimization (SMBO) \citep{liu2018progressive}, gradient-based methods \citep{liu2018darts,Dong_2019_CVPR}, random search \citep{li2019random} and multinomial distribution learning \citep{Zheng_2019_ICCV}.

The summarized results for convolutional architectures on CIFAR10 are presented in Tab.~\ref{tab:cifar_results}. It is worth noting that the proposed method outperforms the state-of-the-arts \citep{zoph2018learning,liu2018darts} in accuracy, with much lower computational consumption. We attribute our superior results to our novel way of solving the problem with pruning, as well as the fast learning procedure: the network architecture can be directly obtained from the distribution when it converges. In \textcolor{black}{contrast}, previous methods \citep{zoph2018learning} evaluate architectures only when the training process is complete, which is highly inefficient. 
Another notable observation from Tab.~\ref{tab:cifar_results} is that, even with random sampling in the search space, the test error rate in \citep{liu2018darts} is only $3.03$\%, which is comparable with the previous methods in the same search space. We can conclude that the high performance of the previous methods is partially due to the search space that is dedicatedly and manually designed with specific expert knowledge. 
Meanwhile, the proposed method quickly explores the search space and generates a better architecture. We also report the results of hand-crafted networks in Tab.~\ref{tab:cifar_results}. Clearly, our method shows a notable enhancement, which indicates its superiority in both resource consumption and test accuracy. 

{Fig.~\ref{fig:weight} \textcolor{black}{shows} the \textcolor{black}{changing process} of the distribution in the search process. Note that during the search process, we retain the corresponding probability for the pruned operation for normalization. Therefore, all the probabilities keep changing in the whole search in Fig.~\ref{fig:weight}.
Interestingly, the ``model collapse"\footnote{According to the previous work \citep{liang2019darts+},  after certain search epochs, the number of skip-connects increases dramatically in the selected architecture, which results in poor performance. } that \textcolor{black}{is} considered as a key factor that influences the final performance of the searched architecture did not appear during the search of DDPNAS. Conversely, many skip connections are pruned in the \textcolor{black}{beginning} ([row1, col2], [row2, col2], [row2, col4]). 
Meanwhile, with the search progresses, more and more skip connections are pruned, and parameter operations are preserved. Finally, from the searched architecture illustrated in Fig.~\ref{fig:darts_architectures}, we can see that most of the edges 
prone to operations with parameters such as sep\_conv\_$3\times3$ and dil\_conv\_$5\times5$. 
A possible explanation is that selecting operations with parameters will result in a more complex neural architecture \textcolor{black}{that} yields better performance eventually. 
Another interesting observation in the searched architecture is the dominance of sep\_conv\_$3\times3$. 
According to the released codes \footnote{\url{https://github.com/quark0/darts/blob/master/cnn/operations.py}}, we find that sep\_conv\_3x3 is deeper than dil\_conv, which could increase the complexity of the architecture and lead to better performance. 
In conclusion, we should design a complex model as much as possible to improve performance. }

\begin{table}[t]
\begin{center}
\setlength{\tabcolsep}{3.5mm}{
\begin{tabular}{lcc}
\hline
 \multirow{2}{*}{\textbf{Constraints $\nu$}}  & \textbf{Params} & \textbf{Test Error} \\
  & \textbf{(M)} & \textbf{(\%)}
 \\
\hline
$\{N = 2\}$ & 3.57 $\pm$ 0.02  & 2.72 $\pm$ 0.005 \\
$\{N = 4\}$ & 3.16 $\pm$ 0.01  & 2.59 $\pm$ 0.01 \\
$\{N = 6\}$ & 2.83 $\pm$ 0.48 &  2.73 $\pm$ 0.02\\
$\{H = 1, N = 4\}$ & 2.8 $\pm$ 0.06 &  2.93 $\pm$ 0.03\\
$\{H = 2, N = 4\}$ & 2.7 $\pm$ 0.05  & 2.87 $\pm$ 0.08 \\
$\{H = 3, N = 4\}$ &  2.7 $\pm$ 0.10 &  2.70 $\pm$ 0.04\\
\hline
\end{tabular}}
\end{center}
\caption{Mean and variance of CIFARF-10 test error and model size under various $\nu=\{H,N\}$ in $4$ runs. $H$ and $N$ denote the height and the number of non-parameter constraints, respectively. }
\label{tab:different_constraint}
\end{table}

\subsection{Result on ImageNet and Transferring}
In Tab.\ref{tab:_imagenet}, we first compare our method under the mobile settings on ImageNet to demonstrate the generalization capability. 
The best architecture obtained by our algorithm on CIFAR10 is transferred to ImageNet, which follows the same settings as the works in \citep{zoph2018learning,pham2018efficient,cai2018path}. 
{The proposed method achieves comparable \textcolor{black}{performance} to the state-of-the-art methods with far less computational cost~\citep{zoph2018learning,real2018regularized,liu2018progressive,real2018regularized,liu2018progressive,pham2018efficient,liu2018darts,cai2018path}.}

{The minimal time and GPU memory consumption make applying our algorithm on ImageNet feasible. The search experiment is \textcolor{black}{further} conducted with Layer-wise search space on ImageNet. On ImageNet, the same search hyper-parameters \textcolor{black}{are} utilized as on CIFAR10.} We follow training settings in \citep{cai2018proxylessnas}, training the models for $120$ epochs with a learning rate of $0.4$ (annealed down to zero following a cosine schedule), and a batch size of $1024$ across four Tesla V100 GPUs. Experimental results are reported in Tab.~\ref{tab:_imagenet}, where our DDPNAS achieves superior performance compared to both human-designed and automatically searched architectures, with much lower computational cost. 
{The architectures could be generated directly under different constraints, by only searching once with the proposed ENG.}
Therefore, there is no extra search time for DDPNAS-B and DDPNAS-C.

\begin{figure}[t]
\center
\includegraphics[width=1.0\linewidth]{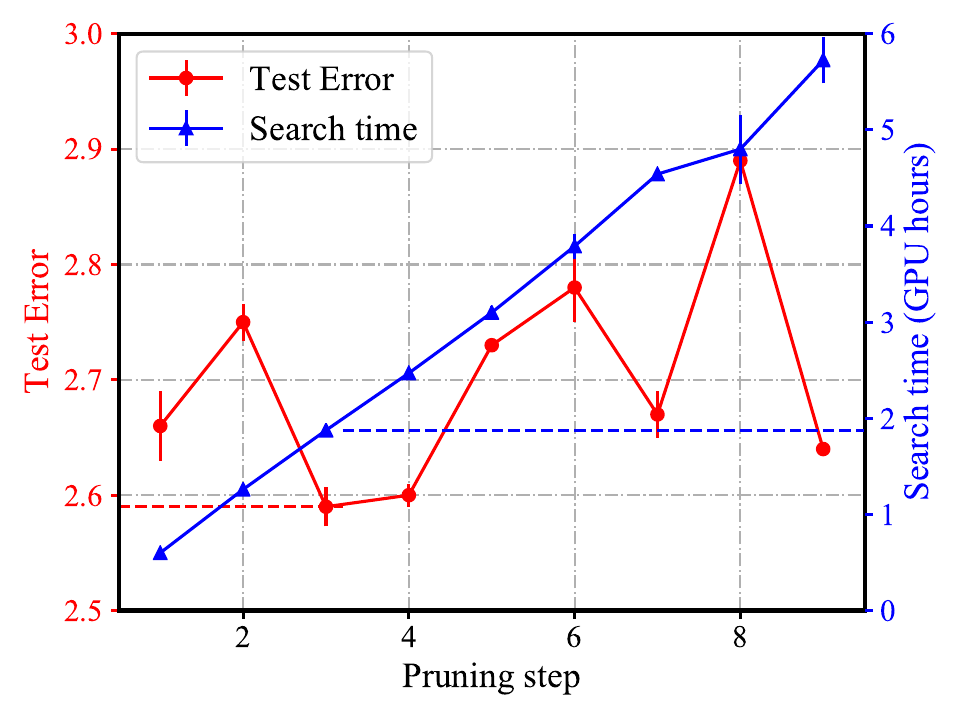}
\caption{\label{fig:pruning_step} The effectiveness of the proposed method with different pruning \textcolor{black}{steps } (with $4$ runs) in terms of test error and search time. The small dots denote \textcolor{black}{the} mean value and the column represents the standard deviation. The dash\textcolor{black}{ed} lines denote the choice of the pruning step in our paper.}
\end{figure}

\begin{figure}[t]
\includegraphics[width=1\linewidth]{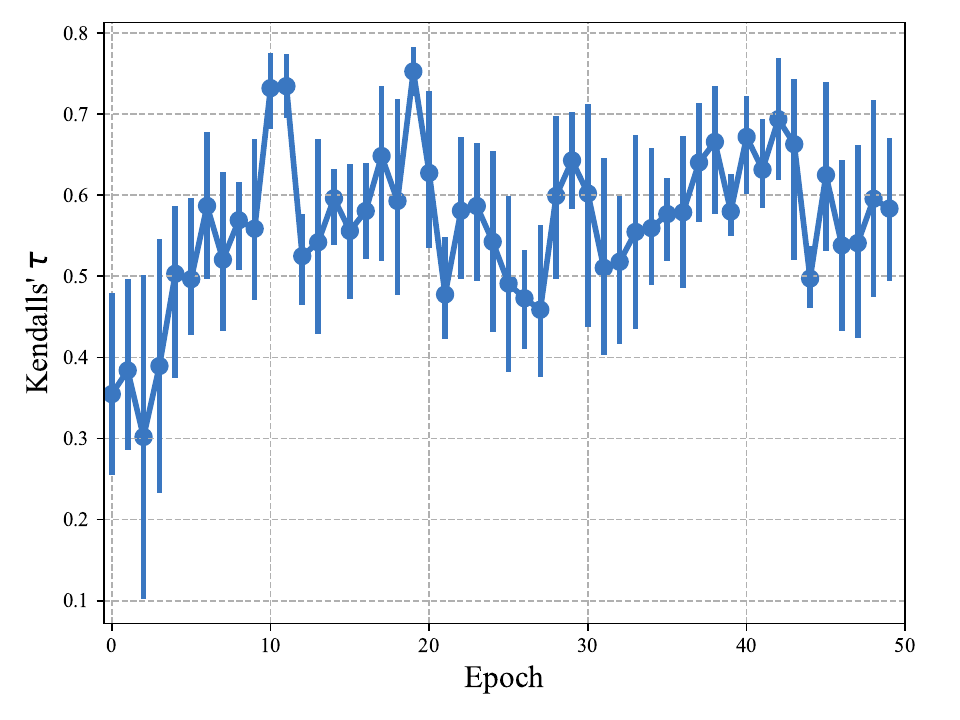}
\caption{{Performance estimation of different epochs in DDPNAS. All the architectures are sampled in NAS-Bench-201. And Kendall's $\tau$ \textcolor{black}{is calculated} in four different random seeds, where the mean and std are reported in the figure. There \textcolor{black}{is} always a high correlation (around $0.5722$) with the truth performance in different training epoch\textcolor{black}{s}. }}
\label{fig:pe_epoch}
\end{figure}

\subsection{Ablation Study}
\textbf{Results under Different Constraints.}
Tab.~\ref{tab:different_constraint} demonstrates the results under different constraints with cell based search space on CIFAR-10. Benefiting from the ENG method, we can generate specialized neural networks for all scenarios without additional searching cost, while previous methods \citep{xu2019pc,chen2019progressive} typically require explicit searching with corresponding \emph{human prior} regularization terms. Specifically, in Tab.~\ref{tab:different_constraint}, from \emph{one search process}, we can control a group of networks with different heights, widths, and model sizes, with a negligible drop in performance. In addition, we can also generate networks (DDPNAS-A/B/C) under different FLOPs constraints ($600/400/200$M) in Tab.~\ref{tab:_imagenet}, which also shows a state-of-the-art performance with similar FLOPs comparing with other models.

\begin{table*}[htb]
\small
\setlength{\tabcolsep}{1 mm}
\centering
\begin{tabular}{c|c|cccccccc}
\hline
\textbf{Method} & \textbf{Params}    & \textbf{SST-2}     & \textbf{MRPC} & \textbf{CoLA}   &  \textbf{RTE} & \textbf{MNLI} & \textbf{QQP} & \textbf{QNLI} & \textbf{Average} \\ \hline
$\text{BERT}_{12}$    & 110m    & 92.7          & 89.5           & 54.3        & 71.1        &   83.5    &    89.8       &  91.2  &  81.7\\
\hline
LayerDrop \citep{fan2019reducing} & 67m & 90.7 & 85.9 & 45.4 & 65.2 & 80.7 & 88.3 & 88.4  & 77.8 \\
DistilBERT \citep{sanh2019distilbert} & 67m & 91.3 & 87.5 & 51.3 & 59.9 & 82.2 & 88.5 & 89.2 & 78.6 \\
BERT-PKD \citep{sun2019patient}  & 67m & 91.3 & 85.7 & 45.5 & 66.5 & 81.3 & 88.4  & 88.4 & 78.2\\
PD-BERT \citep{pd} & 67m & 91.1 & 87.2 & - & 66.7 & 82.5 & 89.1 & 89.0 & - \\
\hline
\textbf{DDPNAS} & \textbf{59m$\sim$67m} & \textbf{91.8} & \textbf{87.7} & \textbf{54.1} & \textbf{69.3} & \textbf{82.9} & \textbf{90.2}  & \textbf{86.5}  & \textbf{80.4} \\
\textbf{DDPNAS NoPruning} & \textbf{59m$\sim$67m} & \textbf{90.8} & \textbf{86.7} & \textbf{52.1} & \textbf{68.7} & \textbf{82.4} & \textbf{88.5}  & \textbf{85.4}  & \textbf{79.2} \\
\hline
\end{tabular}
\caption{\textcolor{black}{Comparison} of DDPNAS with previous works under the same constraints on the development set of GLUE.\label{tab:BERT-compression}}
\end{table*}

 \textbf{Effectiveness of Pruning.}
Tab.~\ref{tab:cifar_results} and Tab.~\ref{tab:BERT-compression} also compare the result \emph{without pruning}, which shows a worse performance in search cost and classification error. To further explore this problem, we also fine-tune the pruning step in Fig.~\ref{fig:pruning_step}. We found that there is a test error decrease with a pruning step between $[2, 4]$, which controls the trade-off between performance and efficiency. We also found that the algorithm tends to select non-parameter layers (max/average pooling, skip connection) with a pruning step larger than $4$, causing an unstable performance in Fig.~\ref{fig:pruning_step}. The above observations are consistent with previous \textcolor{black}{works} \citep{xu2019pc,liu2018progressive} that the model will be overfitting in searching. \textcolor{black}{These authors} propose to use a kind of ``human prior" to enhance the performance. However, in our paper, this problem is simply solved by setting the pruning step less than $4$.

{\textbf{Performance estimation. }We conduct a new experiment to validate the effectiveness of our weight sharing strategy. As illustrated in Fig. \ref{fig:pe_epoch}, there is always a high correlation (around $0.5722$) with the truth performance in different training epoch\textcolor{black}{s}. Here, we use a toy example to explain the effect of performance estimation in DDPNAS. Specifically, we model the performance of the $K$ sampled network architectures using $K$ Gaussian distributions $\mathcal{N}_1(\mu_1, \sigma_1)$, $\mathcal{N}_2(\mu_2, \sigma_2)$, $...\mathcal{N}_K(\mu_K, \sigma_K)$. Meanwhile, we further assume that the $K$ Gaussian distributions share the same variance $\sigma_1 = \sigma_2, ..., \sigma_K = \sigma$ and the mean values are decreased sequentially with the same distance $\mu_1 - \mu_2 = \mu_2 - \mu_3=... =\mu_{K-1} - \mu_K = C\sigma$. In this way, we can adjust the magnitude of $C$ to obtain different qualities of the evaluation. Specifically, when $C$ is small, the examples sampled from the $K$ Gaussian distributions are prone to change in position, resulting in poor evaluation quality, and vice versa. In general, when Kendall's $\tau = 0.57$, the corresponding $C$ is approximately equal to $0.41$. In this case, the probability that we incorrectly prune the best operation is $\mathbb{P}\left(X<\mu+K\cdot C \sigma \right) = \mathbb{P}\left(X<\mu+ 3.28 \sigma \right)>0.99$. In addition, we propose to collect the results in $T$ epochs for evaluation, \textcolor{black}{incorporating} a momentum term, which further \textcolor{black}{enhances} the quality.}

\textbf{Influence of $\gamma$.} From the tuning of $\gamma$ in our experiment, we found that there is no significant difference in terms of $\gamma$ in $[0.7, 0.9]$, which is consistent with previous works \citep{qian1999momentum}.

\subsection{{Transferability}}
{To further validate the transferability of DDPNAS and the searched architectures to diverse tasks in the wild, we conduct two extensive experiments on NLP and semantic segmentation tasks. We first employ DDPNAS on a BERT-compression search space to validate the generalizability of different search spaces. Then, we embed our discovered architectures as backbones into semantic segmentation to evaluate the generalizability of the discovered architectures. We achieve comparable performance to other methods on both of these two tasks, which proves that the DDPNAS can be well transferred to other tasks.}

\subsubsection{BERT-compression}
\textbf{Search space setups.} The search space of BERT-compression consists of two levels. The first level is searching for the different layer numbers $d\in\{6, 8, 10, 12\}$. In each layer, we further allow the intermediate layer $k \in \{512,768,1024,3072\}$ and the head number $h \in \{4,8,12\}$ to be searchable. In this case, the search space is huge enough ($10^{13}$) to support any compression requirements.

{
\textbf{Datasets and metrics.} The search architectures are evaluated on the widely-used GLUE benchmark \citep{wang2018glue}, which consists of $8$ datasets and tested using different metrics. Specifically, SST-2, QNLI QQP and RTE use accuracy as the evaluation metric. The average result of MNLI-m and MNLI-mm is reported on MNLI. While the CoLA is evaluated on Matthew's correlation. And the MRPC is evaluated on F1 and accuracy. Following BERT (Devlin, et al. 2019), we do not consider the controversial WNLI dataset.}

\begin{table*}[tb]
\centering
\begin{tabular}{lccccc}  
\hline
Models&Backbone&mIoU($\%$)&Parameters(M)&FLOPs(B) \\ 
\hline
Resnet-50 ASPP$^{*}$      &  Resnet 50   &   $73.86$        & $23.05$           & $308.74$     \\ 
MobilenetV3 ASPP$^{*}$ &MobileNetV3  & $66.77$ & $6.7$ & $4.45$   \\ 
ESPNetV2 \citep{mehta2019espnetv2} &-&$62.1$&-&$3.86$\\
ESPNetV1 \citep{mehta2018espnet} &-&$60.3$&-&$5.09$\\
CCC2 \citep{park2018concentrated} &-&$62$&$0.192$&$6.29$\\
ALGNet \citep{zhou2020aglnet} &-&$70.1$&$1.12$ & $13.8$\\
\hline
DDPNAS &DDPNAS&$71.6$&3.01&$5.2$\\
\hline
\end{tabular}  
\caption{\textcolor{black}{Comparison} of DDPNAS with previous semantic segmentation works on Cityscapes. $^{*}$ denotes that these baseline models are trained using our implementation with the same training conditions.}  
\label{tab:segmentation}
\end{table*}

{\textbf{Results.} The performance and parameters of DDPNAS are compared with existing BERT compression methods, including BERT-PKD \citep{sun2019patient}, DistilBERT \citep{sanh2019distilbert}, PD-BERT \citep{pd},  and LayerDrop \citep{fan2019reducing}. As reported in Tab.~\ref{tab:BERT-compression}, DDPNAS outperforms all previous STOA models compression methods  with the same level constraint (67M). It is noteworthy that, compared with LayerDrop, DDPNAS achieves a $4.1\%$ higher accuracy on RTE and a $8.7\%$  improvement on CoLA . For DistilBERT \citep{sanh2019distilbert} in which large external data is used, there is still a $1.8\%$ improvement of average accuracy.}

\subsubsection{{Architecture Transfer on Semantic Segmentation}}
{We further validate the \textcolor{black}{transferability} of our method on Semantic segmentation tasks, in which a dense labeling map of object categories is provided for input images. The discovered DDPNAS architecture is directly inserted into atrous spatial pyramid pooling \citep{chen2017deeplab} to robustly segment objects at multiple scales. We \textcolor{black}{conduct} all the experiments on the Cityscapes \citep{Cordts2016Cityscapes}, which contains high-resolution images of $1,024 \times 2,048$ resolutions with pixel-wise annotations. $5, 000$ finely annotated images collected from $50$ cities are included in the \textcolor{black}{dataset}, in which $2, 975$, $500$ and $1, 525$ are split for training, validation and testing respectively. All our architectures are trained independently without pretraining on ImageNet. According to the evaluation protocol, we \textcolor{black}{utilize} $19$ outputs of $30$ semantic labels for evaluation. } 

{Tab.~\ref{tab:segmentation} reports the results on Cityscapes, the architecture equipped with DDPNAS as backbone achieves superior performance than human design ESPNetV2 ~\citep{mehta2019espnetv2}, ESPNetV1~\citep{mehta2018espnet}, CCC2~\citep{park2018concentrated} and ALGNet~\citep{zhou2020aglnet} by $9.5$, $11.3$ $9.6$ and $1.5$, respectively. Meanwhile, the FLOPs are only $5.2$B, which is more efficient than CCC2~\citep{park2018concentrated} and ALGNet~\citep{zhou2020aglnet}. Besides, the found DDPNAS architectures also achieve a superior trade-off between model complexity and mIOU. All of our models outperform by at least $4.83$ with similar FLOPs, compared to MobilenetV3 \citep{howard2019searching}. }

\section{Conclusion}
In this paper, we presented DDPNAS, the first pruning-based NAS algorithm, which reduces the search time by pruning the search space in the early training stage.
Together with the efficient network generation, the proposed framework can drastically reduce the computational cost under different constraints, while achieving excellent model accuracies on CIFAR10 and ImageNet. Furthermore, DDPNAS can directly search on ImageNet, outperforming human-designed networks and other NAS methods under mobile settings.

\section{acknowledgements}
This work was supported by China National Postdoctoral Program for Innovative Talents (BX20220392), China Postdoctoral Science Foundation (2022M711729), the National Science Fund for Distinguished Young Scholars (No.62025603), 
the National Natural Science Foundation of China (No. U21B2037, No. U22B2051, No. 62176222, No. 62176223, No. 62176226, No. 62072386, No. 62072387, No. 62072389, No. 62002305 and No. 62272401), 
Guangdong Basic and Applied Basic Research Foundation(No.2019B1515120049), and the Natural Science Foundation of Fujian Province of China (No.2021J01002,  No.2022J06001).

\bibliography{sn-bibliography}


\end{document}